\newtheorem{definition}{Definition}
\newtheorem{theorem}{Theorem}
\begin{document}
%
\title{Deictic Image Mapping: An Abstraction For Learning Pose Invariant Manipulation Policies}
\author{Robert Platt, Colin Kohler, Marcus Gualtieri\\ \\
  College of Computer and Information Science, Northeastern University\\
  360 Huntington Ave, Boston, MA 02115, USA\\
  \texttt{\{rplatt,ckohler,mgualti\}@ccs.neu.edu} \\ \\
}
\maketitle
\begin{abstract}
In applications of deep reinforcement learning to robotics, it is often the case that we want to learn pose invariant policies: policies that are invariant to changes in the position and orientation of objects in the world. For example, consider a peg-in-hole insertion task. If the agent learns to insert a peg into one hole, we would like that policy to generalize to holes presented in different poses. Unfortunately, this is a challenge using conventional methods. This paper proposes a novel state and action abstraction that is invariant to pose shifts called \textit{deictic image maps} that can be used with deep reinforcement learning. We provide broad conditions under which optimal abstract policies are optimal for the underlying system. Finally, we show that the method can help solve challenging robotic manipulation problems.
\end{abstract}

\section{Introduction}
\label{Introduction}





Policies learned by deep reinforcement learning agents are generally not invariant to changes in the position and orientation of the camera or objects in the environment. For example, consider the peg-in-hole insertion task shown in Figure~\ref{fig:peg-hole}. A policy that can insert the peg into \textit{Hole A} does not necessarily generalize to \textit{Hole B}. In order to learn a policy that can perform both insertions, the agent must be presented with examples of the hole in both configurations during training. This is a significant problem in robotics because we often want to learn policies most easily expressed relative to an object (i.e. relative to the hole) rather than relative to an image. Without the ability to generalize over pose, deep reinforcement learning must be trained with experiences that span the space of all possible pose variation. For example, in order to learn an insertion policy for any hole configuration, the agent must be trained with task instances for hole configurations spanning $SE(2)$, a three dimensional space. The problem is even worse in $SE(3)$ which spans six dimensions. The need for all this training data slows down learning and hampers the agent's ability to generalize over other dimensions of variation such as task and shape. While pose invariance is not always desirable, the inability to generalize over pose can be a major problem.

\begin{figure}
    \centering
    \includegraphics[width=0.2\textwidth]{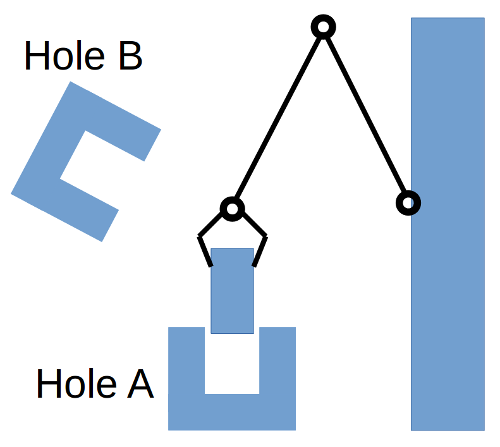}
    \caption{A policy for inserting a peg into Hole A learned using deictic image mapping generalizes to Hole B without any additional training.}
    \label{fig:peg-hole}
\end{figure}


This paper introduces \textit{deictic image mapping}, an approach to encoding robotic manipulation actions that generalizes over the pose of the robot and other objects in the world. There are a couple of key ideas. First, instead of encoding robotic motion at the level of joint velocities or incremental motions, we give the agent access to actions that can move the end-effector directly to a desired position and orientation via a collision free motion planner. Second, we encode these end-to-end motion actions in terms of an image of the world that is centered, aligned, and focused on the target pose of the motion. Since this representation encodes motion actions in terms of the configuration of the world relative to the endpoint of the motion, it generalizes well over changes in object pose and camera viewpoint. We call this a ``deictic'' representation because motion actions are encoded relative to the environment rather than an absolute reference frame. 

One of the interesting aspects of this work is that we show that for a large class of problems, the proposed deictic state and action abstraction induces an MDP homomorphism~\cite{ravindran2004algebraic,ravindran2003smdp}. In consequence, we are assured that optimal policies found in using the deictic representation induce optimal policies for the original problem. An important limitation of our approach is the large action set that is often required: we have as many of 26.9k actions in our experiments. This paper introduces a number of techniques for handling this large branching factor in a DQN framework. Finally, we report on a series of experiments that evaluates the approach both in simulation and in hardware. The results show that the method can solve a variety of challenging manipulation problems with training times less than two hours on a standard desktop GPU system.

\section{Problem Statement}

We will consider problems expressed for a class of robotic systems that we call \textit{move-effect} systems. 


\begin{definition}[Move-Effect System]
A \textit{move-effect} system is a discrete time system comprised of a set of effectors mounted on a fully actuated mobile base that operates in a Euclidean space of dimension $d \in \{2,3\}$. On every time step, the system perceives an image or signed distance function, $I$, and then executes a collision-free motion, $a_m \in \mathcal{A}_M \subset SE(d)$, of the base followed by a motion, $a_e \in \mathcal{A_E}$, of the effectors. 
\end{definition}

A good example of a move-act robot is a robotic arm engaged in prehensile manipulation. The robot may move its hand (i.e. the ``base'') to any desired reachable pose $a_m \in \mathcal{A}_M \subset SE(3)$ via a collision-free motion. Once there, it may execute an effector motion $a_e \in \mathcal{A}_E$ such as opening or closing the hand. More complex manipulation systems can also be formalized this way. For example, a domestic robot that navigates within a house performing simple tasks can be expressed as a move effect system in $SE(2)$ where the simple tasks to be performed are the ``effector motions''.



\subsection{Formulation as a Markov decision process}
\label{sect:mdp_formulation}

A Markov decision process (an MDP) is a tuple $\mathcal{M} = (\mathcal{S},\mathcal{A},T,R)$ where $\mathcal{S}$ denotes a set of system states and $\mathcal{A}$ a discrete set of actions. $T : \mathcal{S} \times \mathcal{A} \times \mathcal{S} \rightarrow [0,1]$ denotes the transition probability function where $T(s_t,a_t,s_{t+1})$ is the probability of transitioning to state $s_{t+1}$ when action $a_t$ is taken from state $s_t$. $R:\mathcal{S} \times \mathcal{A} \rightarrow \mathbb{R}$ denotes the expected reward of executing action $a$ from state $s$. The solution to an MDP is a control policy $\pi : \mathcal{S} \rightarrow \mathcal{A}$ that maximizes the expected sum of future discounted rewards when acting under the policy. We will assume that a motion planner (e.g. an RRT or trajectory optimization planner) is available that can find a collision free path to any reachable pose.


\begin{figure}
    \centering
    \includegraphics[width=0.4\textwidth]{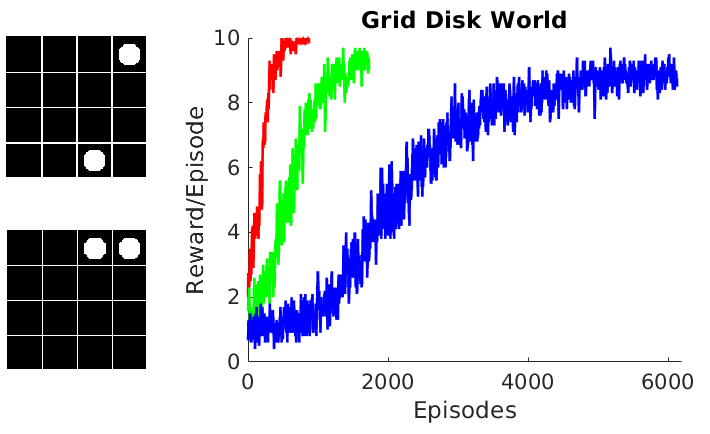}
    \caption{Left: the \textsc{grid-disk} domain. Disks are initially placed on a grid randomly (top left). The agent must pick one disk and place it adjacent to another (bottom left). Right: learning curves averaged over 10 runs for DQN applied to \textsc{grid-disk} for a $3 \times 3$ grid (red), a $4 \times 4$ grid (green), and a $5 \times 5$ grid (blue).}
    \label{fig:grid-disk}
\end{figure}

In order to use the MDP framework for move-effect systems, we need to define state and action sets. An action is defined to be a pair, $a = (a_m,a_e) \in \mathcal{A}$ where $\mathcal{A} = \mathcal{A}_M \times \mathcal{A}_E$. The agent does not observe state directly, but it can observe an image $I \in [0 \dots 2^{16}]^{|grid|} = \mathcal{I}$, taken at the beginning of each time step taken via a camera and/or depth sensor. Here, $grid \subset \mathbb{R}^d$ denotes the positions of a finite set of points corresponding to the pixels or voxels in the image or signed distance function. The system also has access to the configuration of its effectors, $\theta \in \Theta$, obtained using joint position sensors. We define state to be a history of the $k$ most recent observations and actions:
\begin{equation}
s_t = \langle I_t^{1-k}, \theta_t^{1-k}, a_t^{1-k}, \dots ,I_t^{-1}, \theta_t^{-1}, a_t^{-1} , I_t, \theta_t \rangle,
\label{eqn:state_definition}
\end{equation}
where $I_t$ and $\theta_t$ denote the current values for those two variables and $I_t^{-i}$, $\theta_t^{-i}$, and $a_t^{-i}$ denote the respective variables as they were $i$ time steps in the past (i.e. $i$ time steps prior to $t$). $\mathcal{S} = \mathcal{I}^{k} \times \Theta^{k} \times \mathcal{A}^{k-1}$ is the set of all states. History based representations of state as in Equation~\ref{eqn:state_definition} are often used in deep reinforcement learning (e.g. \cite{mnih2013playing}). This system is Markov for a sufficiently large value of $k$.


\subsection{Why move-effect problems are challenging}

\label{sect:move_effect_example}

The standard MDP formulation described in the previous section is not well suited to move-effect systems. For example, consider the \textsc{grid-disk} domain as shown in Figure~\ref{fig:grid-disk}, where the argent must pick one disk and place it horizontally adjacent to the other. Transitions are deterministic: a pick (resp. place) succeeds if executed for an occupied (resp. unoccupied) cell and does not otherwise. This example is a move-effect system where $\mathcal{A}_M \subset SE(2)$ is the set of 16 grid positions and $\mathcal{A}_E$ contains exactly one pick and one place action (32 actions total). On each time step, the agent observes an image of the grid as well as a single bit that denotes the configuration of its hand -- either open or closed (Equation~\ref{eqn:state_definition} for $k=1$). Using DQN on the MDP formulation described above, the number of episodes needed to learn a good policy increases as the number of cells in the grid increases, as shown in Figure~\ref{fig:grid-disk}, right. In this experiment, we used a standard $\epsilon$-greedy DQN with dueling networks~\cite{wang2015dueling}, no prioritized replay~\cite{schaul2015prioritized}, a buffer size of 10k, a batch size of 10, and an episode length of 10 steps. Epsilon decreased linearly from 100\% to 10\% over the training session. The neural network has two convolutional+relu+pooling layers of 16 and 32 units respectively with a stride of 3 followed by one fully connected layer with 48 units. We use the Adam optimizer with a learning rate of 0.0003.

\section{Deictic image mapping}

Move-effect systems have structure that makes problems involving these systems easier to solve than the results in Figure~\ref{fig:grid-disk} suggest. Notice that in the case of \textsc{grid-disk}, the optimal policy is most easily expressed relative to current disk positions on the grid: the agent must learn to pick up a disk and place it horizontally or vertically adjacent to the other. This reflects a symmetry in the problem whereby world configurations where objects occupy the same relative configurations have similar (i.e. symmetric) transitions and reward outcomes. In order to improve learning performance, we need to encode the problem in a way that reflects this symmetry. Note that this cannot be accomplished just by switching to an actor critic method like DDPG~\cite{lillicrap2015continuous}. DDPG could make it easier for the agent to learn to generalize over position, but it cannot generalize over orientation. Instead, we introduce deictic image state and action mappings that induce an abstract MDP that captures problem symmetries and can be solved using DQN.

\subsection{Action mapping}

Recall that in a move-effect system, each action is a move-effect pair,  $a_t = \langle a_m(t), a_e(t) \rangle \in \mathcal{A} = \mathcal{A}_M \times \mathcal{A}_E$, where we use the notation $a_m(t)$ to describe the destination of the base motion and $a_e(t)$ to describe the effector motion at time $t$. The hard part about expressing this action is expressing the $a_m$ component: whereas $\mathcal{A}_E$ is relatively small, $a_m$ could be any position and orientation within the work space of the manipulator. Our key idea is to express $a_m$ in terms of the appearance of the world in the vicinity of $a_m$ rather than as coordinates in a geometeric space. Recall that on each time step, the move-effect system observes an image or signed distance function, $I \in \mathcal{I} = [0 \dots 2^{16}]^{|grid|}$, expressed over a finite set of positions, $grid \subset \mathbb{R}^d$, corresponding to pixels or voxels. Let $crop(I,a_m) \in \mathcal{I}' = [0 \dots 2^{16}]^{|\overline{grid}|}$ denote a cropped region of $I$ centered on $a_m$ and aligned with the basis axes of $a_m$, defined over the positions $\overline{grid} \subset grid$. $\overline{grid}$ is assumed to correspond to a closed region of $I$. In our experiments, it is a square or cube centered on and aligned with $a_m$. The action mapping is then $g_s : \mathcal{A} \rightarrow \mathcal{A}' = \mathcal{I}' \times \mathcal{A}_E$ where:
\begin{equation}
\label{eqn:action_mapping}
g_{s_t}(a_t) = \langle crop(I_t,a_m(t)), a_e(t) \rangle,
\end{equation}
$I_t$ is an element of $s_t$, and $a_m(t)$ and $a_e(t)$ are elements of $a_t$. We call this a \textit{deictic image} action mapping because it uses an image to encode the motion relative to other objects in the world. This can be viewed as an action abstraction~\cite{ravindran2004algebraic,ravindran2003smdp} and we will call $\mathcal{A}'$ the \textit{abstract action} set.

\subsection{State mapping}

The action mapping of Equation~\ref{eqn:action_mapping} induces a state abstraction. Recall that we defined state to be the history of the last $k$ observations and actions (Equation~\ref{eqn:state_definition}). We can simplify this representation by using the action abstraction of Equation~\ref{eqn:action_mapping}. The easiest way to do this is to define abstract state to be the current robot state paired with a history of the $k-1$ most recently executed abstract actions:
\begin{eqnarray}
\nonumber
f_k(s_t) = \langle \{ & crop(I_t^{1-k},a_m^{1-k}(t)), a_e^{1-k}(t), \dots,  \\
& crop(I_t^{-1},a_m^{-1}(t)), a_e^{-1}(t), \theta_t \rangle
\label{eqn:state_abstraction_original_crops}
\end{eqnarray}
where $I_t^{-i}$, $a_m^{-i}(t)$, and $a_e^{-i}(t)$ are elements of $s_t$ and $\mathcal{S}' = \mathcal{I}'^{k-1} \times \mathcal{A}_E^{k-1} \times \Theta$ is the set of abstract states. When $k$ is understood, we will sometimes abbreviate $f = f_k$. We refer to $\mathcal{S}'$ as the \textit{abstract state set}.



\begin{figure}
    \centering
    \includegraphics[width=0.4\textwidth]{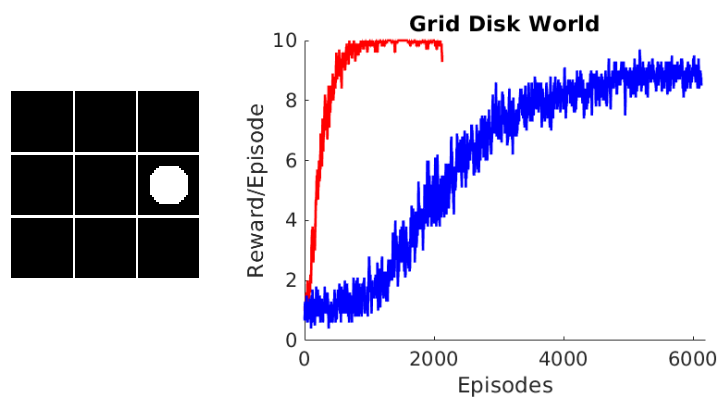}
    \caption{Left: example of a deictic action representation. Right: deictic (red) versus DQN baseline (blue) for $5 \times 5$ \textsc{grid-disk}.}
    \label{fig:deictic_v_baseline}
\end{figure}

\subsection{DQN in the abstract state and action space}

\label{sect:abstractdqn}

The state and action mappings introduced above induce an abstract MDP that can be solved using DQN. Let $Q' : \mathcal{S}' \times \mathcal{A}' \rightarrow \mathbb{R}$ denote the abstract action-value function, expressed over the abstract state and action space, encoded by a deep neural network. At the beginning of each time step, we evaluate the abstract state, $s_t' = f(s_t)$, for the current state, $s_t$. We can calculate the greedy action for the current state $s_t$ with respect to the abstract action-value function $Q'$ using:
\begin{equation}
a^* = \arg\max_{a \in \mathcal{A}} Q'(f(s_t),g_{s_t}(a)).
\end{equation}
After each time step, we store the underlying transition, $(s_t,a_t,s_{t+1},r_t)$ in the standard way. Training is nearly the same as usual except that after sampling a mini-batch, we calculate targets using the abstract state-action value function, $r_t + \max_{a \in \mathcal{A}} Q'(f(s_t),g_{s_t}(a))$, rather than in the standard way. The  neural network that encodes $Q'$ takes an abstract state-action pair as input and outputs a single estimate of value. In our experiments, we use a standard convolutional architecture comprised of two convolution+relu+pooling layers followed by fully connected layers.

Consider the following example of deictic image mapping for the \textsc{grid-disk} domain introduced earlier. Recall that for an image $I \in \mathcal{I}$ and a base motion $a_m \in \mathcal{A}_M$, $crop(I,a_m)$ is a cropped region of $I$ centered and aligned with $a_m$. $\mathcal{A}_M$ is the set of base motion target locations corresponding to the cells ($|\mathcal{A}_M| = 16$ for $4 \times 4$ \textsc{grid-disk}). In this example, we define $crop(I,a_m)$ to be the $3 \times 3$ cell square region centered on $a_m$. (We use zero-padding for $a_m$ positions on near the edge of the grid.) For example, Figure~\ref{fig:deictic_v_baseline} left shows $crop(I,a_m)$ for the place action shown in Figure~\ref{fig:grid-disk}b. Since the agent is to place a disk to the left of an existing disk, the deictic image for this place action shows the existing disk to the right of an empty square where the place will occur. $\theta$ is the configuration of the effector: in this case just the configuration of the gripper jaws. Figure~\ref{fig:deictic_v_baseline} right compares learning curves for DQN with deictic image states and actions versus the DQN baseline averaged over 10 runs each for the $5 \times 5$ \textsc{grid-disk} domain. DQN is parameterized just as it was in the last section. Notice that deictic image mapping speeds up learning considerably. 



\section{Scaling up}
\label{sect:scaling_up}

A key challenge for deictic image mapping is the large branching factor that is created by using end-to-end planned motions rather than small displacements as actions. In standard DQN, we must evaluate the Q-value of each action in order to select one for execution. However, many problems of practical interest involve tens or hundreds of thousands of potential actions. One approach to handling this would be to adapt an actor critic method such as DDPG to the deictic image mapping setting. However, we have so far only explored the DQN version of our approach and leave an actor critic version for future work. Instead, this section introduces several techniques that together enable us to handle the large branching factor for problems in SE(2). Note that large action spaces are becoming more common in the literature. For example, \cite{kunli_rss2018} uses a hand-coded heuristic to prune action choices and \cite{zeng2018learning} uses a fully convolutional network architectures to evaluate Q-values for millions of actions.


\noindent
\textbf{Passing multiple actions as a batch to the Q-network:} Perhaps the easiest way to handle the large number of actions is to pass a large set of action candidates to the Q-network as a single batch. Neural network back-ends such as TensorFlow are designed for this and it enables us to evaluate as many as 4.6k actions in a single forward pass on a standard NVIDIA 1080 GPU.

\noindent
\textbf{Keeping an estimate of the state-value function:} The large action branching factor is a problem every time it is necessary to evaluate $\max_{a \in \mathcal{A}} Q'(f(s_t),g_{s_t}(a))$. This happens twice in DQN: once during action selection and again during training when evaluating target values for a mini-batch. We eliminate the second evaluation by estimating the abstract state-value function, $V'$, concurrently with the abstract action-value function, $Q'$, using a separate neural network. $V'$ is trained using targets obtained during action selection: each time we select an action for a given state, we update the $V'$ network using the just-evaluated max for that state. This enables us to calculate a target using $r_t + \gamma V'(f(s_{t+1}))$ rather than $r_t + \gamma \max_{a \in \mathcal{A}} Q'(f(s_t),g_{s_t}(a))$. 

\noindent
\textbf{Value function hierarchy:} Another way to speed up evaluation of $\max_{a \in \mathcal{A}} Q'(f(s_t),g_{s_t}(a))$ is to leverage a hierarchy of value functions specifically designed for move-effect problems in $SE(2)$. Rather than estimating $Q'$ directly using a single neural network, we use a hierarchy of two networks, $Q_1'$ and $Q_2'$ (although one could envision additional levels of hierarchy). $Q_2'$ is trained in the standard way: for each $s,a$ pair in a mini-batch, $Q_2'(f(s),g_s(a))$ is trained with the corresponding target. However, we also train $Q_1'(f(s),g_s(Fix(a)))$ with the same target, where $Fix(a) \in SE(2)$ denotes the pose with the same position as $a$ but with an orientation that is fixed with respect to the base reference frame. Essentially, the $Q_1'$ estimate is an average of the value function over all orientations for each given position. 

Instead of maximizing by evaluating $Q_2'$ for the position and orientation of all possible motion actions, we score each position using $Q_1'$ and then maximize $Q_2'$ over the positions and orientations corresponding to the top scoring positions under $Q_1'$. This is essentially a graduated non-convexity method~\cite{blake1987visual}. First, we evaluate $Q_1'(f(s),g_s(\bar{a}))$ over all positions, $\bar{a} \in \bar{\mathcal{A}} = \{Fix(a) : a \in \mathcal{A}\}$. Let $\bar{\mathcal{A}}_{top}$ denote the top scoring $\eta$ percent of the abstract actions in $\bar{\mathcal{A}}$. Then, we evaluate $\max_{a \in \bar{\mathcal{A}}_{top}} Q_2'(f(s),g_s(Fix^{-1}(a)))$, where $Fix^{-1}$ denotes the one-to-many inverse of $Fix$ that returns all fully-specified poses that correspond to positions in $\bar{\mathcal{A}}_{top}$. This approach enables us to estimate the maximum of $Q_2'$ without exhaustively evaluating the function for all poses in $SE(2)$. Like all graduated non-convexity methods, this approach is not guaranteed to provide an optimum. However, our results indicate that it works reasonably well on our problems (see the comparison in the next section).



\noindent
\textbf{Using hand-coded heuristics:} Another way to reduce the action branching factor is to encode human knowledge about which kinds of motions are likely to be relevant to the problem. This type of approach has been used by others including \cite{kunli_rss2018} who hand-coded a heuristic for identifying which push actions. In this paper, we constrain the system to move only to positions that are within a neighborhood of some other visible object: we discard all motions, $a_m$, where the associated cropped image patch, $crop(I,a_m)$, does not include pixels with some positive height above the table plane.

\noindent
\textbf{Curriculum learning:} Another thing that can speed up training is curriculum learning~\cite{bengio2009curriculum}. This basically amounts to training the agent to solve a sequence of progressively more challenging tasks. The hard part is defining the task sequence in such a way that each task builds upon knowledge learned in the last. However, this is particularly easy with deictic image mapping. In this case, we just need to vary the discretization of $\mathcal{A}_M$ on each curriculum task, starting with a coarse and ending with a fine discretization. For example, suppose we want to learn a \textsc{grid-disk} policy over a fine grid of possible disk locations. We would initially train using a coarse discretization and subsequently train on a more fine discretization. Although some of the image patches in the fine discretization will be different from what was experienced in the coarse discretization, these new image patches will have a similar appearance to nearby patches in the coarse discretization. Essentially, base motions that move to similar locations will be representated by similar image patches. As a result, policy knowledge learned at a coarse level will generalize appropriately at the fine level.

\section{Experiments}

We evaluate deictic image mapping in simulation and on a robot for problems with an action set that spans $SE(2)$.

\begin{table*}[h!]
  \centering
  \begin{tabular}{| r | c | c | c | c | c | c | c | c |} 
    \hline
    Curriculum stage number & 1 & 2 & 3 & 4 & 5 & 6 & 7 & 8 \\
    \hline
    Object Type & Disks & Disks & Blocks & Blocks & Blocks & Blocks & Blocks & Blocks \\
    \hline
    Num Positions & 25 & 25 & 25 & 25 & 81 & 289 & 841 & 841 \\
    \hline
    Num Orientations & 2 & 8 & 2 & 4 & 8 & 8 & 8 & 16 \\
    \hline
    Num Actions & 100 & 400 & 100 & 200 & 1.2k & 4.6k & 13.5k & 26.9k \\
    \hline
  \end{tabular}
  \vspace{0.2cm}
  \caption{Eight-stage curriculum used in the simulated experiments.}
  \label{table:curriculum}
\end{table*}

\begin{figure}
    \centering
    \includegraphics[width=0.45\textwidth]{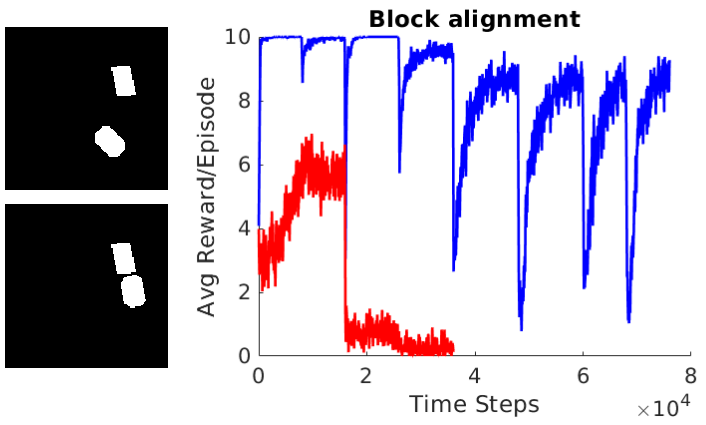}
    \caption{Left: block alignment problem. Top left: two blocks are placed randomly. Bottom left: the agent must pick up one block and place it roughly in alignment with the second. Right: learning curves for simulated blocks world domain. Blue: DIM with hierarchical value function only in last two curriculum steps. Red: DQN baseline.}
    \label{fig:blocks_world_results}
\end{figure}

\subsection{Block alignment in simulation}
\label{sect:sim_blocks_world}

We evaluate on the block alignment problem shown on the left side of Figure~\ref{fig:blocks_world_results} where the agent must grasp one block and place it in alignment with the other. We use a dense discretization of $SE(2)$ in this problem: the agent can choose to perform either a pick or place action (2 possibilities) at any position on a $29 \times 29$ grid (841 possibilities) at one of 16 orientations over 180 deg (equivalent to 32 orientations over 360 deg because of gripper symmetry), for a total of 26.9k different actions (last column of Table~\ref{table:curriculum}). We perform experiments to answer the following questions. (1) Can deictic image mapping solve the block alignment problem and how does it perform relative to a flat DQN solution? (2) How much does curriculum learning help? (3) What effect does the value function hierarchy have on learning?

\noindent
\textbf{(1) Comparison with a DQN baseline:} Here, we follow the eight-task curriculum shown in Table~\ref{table:curriculum}, starting with a task involving disks as in Figure~\ref{fig:grid-disk} and ending with the block alignment task. During the first six stages of curriculum learning, we use all of the scaling-up techniques described in the last section except for the value function hierarchy (as we show later, this slows learning performance). We only turn on the value function hierarchy in the last two stages because the NVIDIA 1080 ran out of memory without the hierarchy. Both DQN and deictic image mapping was parameterized exactly as it was earlier in the paper except that the deictic experiments used prioritized replay with $\alpha=0.6$, $\beta=0.4$, and $\epsilon=0.000001$ and we start $\epsilon$-greedy exploration with $\epsilon = 0.5$ instead of $\epsilon = 1$ (but $\epsilon$ still decreases linearly with time). The full training curriculum executes in approximately 1.5 hours on a standard Intel Core i7-4790K running one NVIDIA 1080 graphics card.

Figure~\ref{fig:blocks_world_results} shows the results. DQN performance is shown in red while deictic image mapping performance is shown in blue. Notice the ``spikes'' downward in the learning curve. Each spike corresponds to the start of a new task in the curriculum (a total of eight spikes including the one at episode zero). After the agent solves a task in the curriculum, this triggers a switch to a new task. Performance drops while the agent learns the new task but then recovers. The DQN agent was stopped after the fourth curriculum stage because average reward per episode had dropped to nearly zero. Notice that DQN completely fails to solve this task -- it only learns anything meaningful in the easiest stages of the curriculum. In contrast, the deictic agent does well throughout all eight curriculum stages. Note that average reward per episode is slightly lower on the fourth curriculum stage and onward because these tasks involve blocks in at least four orientations which can sometimes be infeasible. This occurs when both blocks are placed diagonally in a corner so that it is impossible to pick up either block and place it in alignment with the other.

\begin{figure}
    \centering
    \subfigure[]{\includegraphics[width=0.23\textwidth]{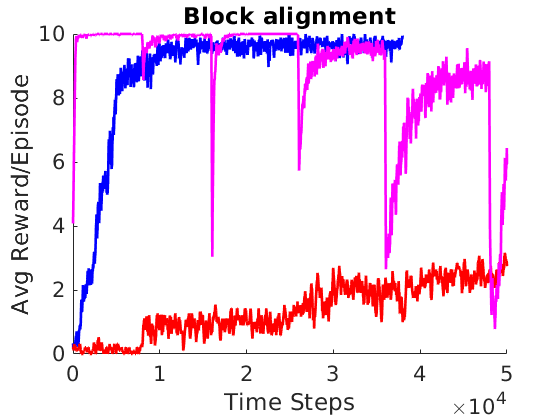}}
    \subfigure[]{\includegraphics[width=0.23\textwidth]{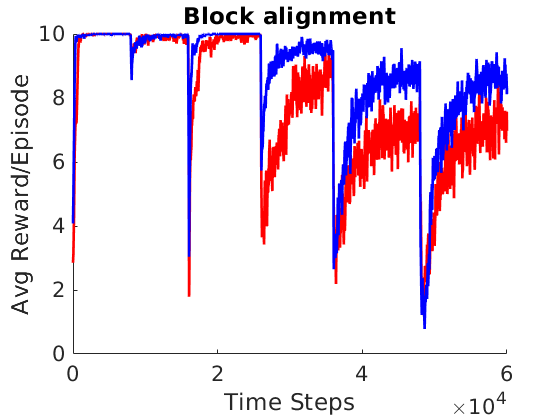}}
    \caption{(a) Ablation of curriculum learning. Blue, red: learning curves for curriculum stages 4 and 5, respectively, without running stages 1--3 first. Magenta: learning curve with curriculum. (b) Ablation of hierarchical value function. Blue: WITHOUT Q-function hierarchy; Red: WITH Q-function hierarchy.}
    \label{fig:blocks_world_results2}
\end{figure}

\noindent
\textbf{(2) The effect of curriculum learning:} We investigate learning performance if we were to start learning for stages 4 and 5 of the curriculum without first going through the prior curriculum steps. Here, the neural network is initialized with random weights and we allow the agent to learn for the same number of time steps as would be required by the curriculum to get to that point. Figure~\ref{fig:blocks_world_results2}a shows the results. The agent is able to learn stage 4 of the curriculum relatively quickly starting from random weights, but it cannot learn stage 5. This suggests that curriculum stages 1--3 are superfluous but that stages 5 and onward cannot be learned without some form of curriculum learning. Looking at Table~\ref{table:curriculum}, this makes sense because it is at stage 5 that the number of actions jumps from 400 to 1.2k.

\noindent
\textbf{(3) The effect of the value function hierarchy:} Another part of our strategy for speeding up the Q-function maximization operation is the value function hierarchy strategy outlined earlier. We compare learning performance for the first six curriculum stages with and without this additional feature. Recall that $\eta$ is the percentage of top scoring positions in $Q_1'$ that are selected to further evaluation in $Q_2'$. A larger $\eta$ results in faster evaluation, but a potentially worse estimate of the maximum. Here, we set $\eta=0.2$ Figure~\ref{fig:blocks_world_results2}b shows the comparison. Notice that the average reward per episode is approximately 15\% worse with the hierarchy as compared to without it. We conclude that while the value function hierarchy is sometimes needed to train in a reasonable period of time, it should be avoided if possible.

\subsection{Robot experiments}

We performed robot experiments to get a sense of the kinds of tasks that deictic image mapping can help solve. We used a UR5 equipped with a Robotiq two finger gripper, as shown in Figure~\ref{fig:real_robot}a. In all experiments, objects were dumped randomly onto a table in randomly selected positions within an $50 \times 50$ cm region within the manipulator workspace. All end-to-end motions commanded by the agent were planned using TrajOpt~\cite{schulman2013finding} with the final gripper pose constrained to be orthogonal to the table. All images were acquired using a depth sensor (Structure IO, the light blue device in Figure~\ref{fig:real_robot}a,b) mounted to the gripper. At the beginning of each time step, because of camera calibration inaccuracies, the robot took images from four different positions above the table and stitched them together to create a complete 2-D image. Then, the system selected and performed a pick or place action consisting of a single collision free motion followed by an opening or closing motion of the gripper. In all experiments, the system was trained in completely in simulation using OpenRAVE~\cite{Diankov2008} using a simulated setup nearly identical to the physical robot. Then, the policy learned in simulation was executed and evaluated on the real robot. We evaluated performance for the following five tasks. All tasks had sparse rewards: the agent received 0 reward everywhere except upon reaching the goal state.

\begin{figure*}
    \centering
    \subfigure[]{\includegraphics[height=0.123\textwidth]{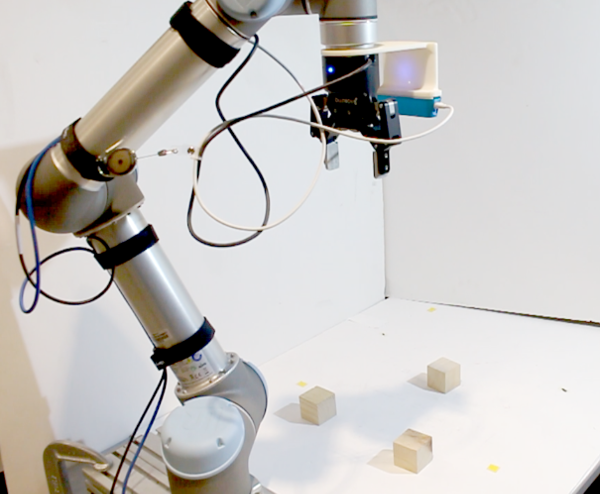}}
    \subfigure[]{\includegraphics[height=0.123\textwidth]{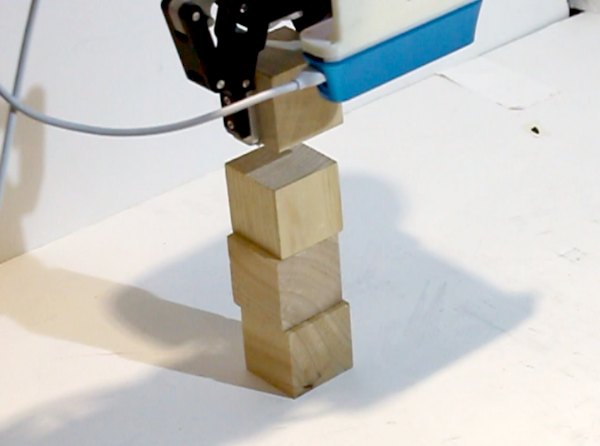}}
    \subfigure[]{\includegraphics[height=0.123\textwidth]{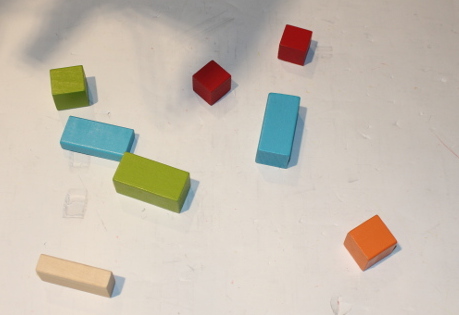}}
    \subfigure[]{\includegraphics[height=0.123\textwidth]{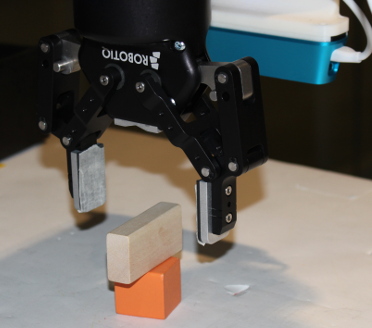}}
    \subfigure[]{\includegraphics[height=0.123\textwidth]{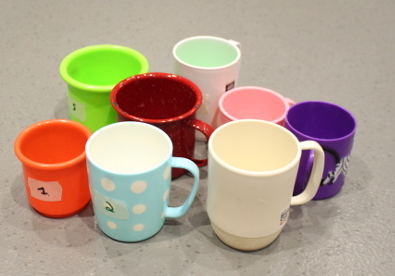}}
    \subfigure[]{\includegraphics[height=0.123\textwidth]{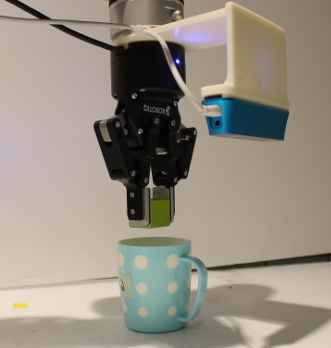}}
    \caption{(a) Experimental scenario. (b) four block stacking task. (c) initial block configuration for rectangular block stacking. (d) rectangular block stacking task. (d) Mugs used in cube-in-mug task. (f) Cube-in-mug task.}
    \label{fig:real_robot}
\end{figure*}

\begin{table}
  \centering
  \begin{tabular}{|c | c | c | c |} 
    \hline
    Task & Success  & Avg \# & Num \\
    & Rate & Steps & Trials \\
    \hline
    \hline
    3-in-row cube & 94\% & 7.8 & 50 \\
    \hline
    3 Cube Stack & 90\% & 4.7 & 50 \\
    \hline
    4 Cube Stack & 84\% & 6.9 & 50 \\
    \hline
    2 Rectangular Block Stack & 80\% & N/A & 83 \\
    \hline
    Cube in mug & 94\% & 2.72 & 50 \\
    \hline
  \end{tabular}
  \vspace{0.2cm}
  \caption{Results from robot experiments.}
  \label{table:robotresults}
\end{table}

\noindent
\textbf{Three-in-a-row cube arrangement:} In this task, three cubes were dumped onto a table and then re-oriented manually so that each block had a fixed orientation with respect to the robot. The system was trained using the same parameters as used in simulation, but using four curriculum stages starting with a 10cm step size and annealing down to a 1cm step size. The agent received +1 reward for achieving a three-in-a-row alignment within ten time steps and zero otherwise. As shown in the first row of Table~\ref{table:robotresults}, we obtain a 94\% task success rate over 50 trials.

\noindent
\textbf{Stacking three cubes:} This task is the same as three-in-a-row (above) except that the agent now received +1 reward for stacking the three cubes. We obtain a 90\% task success rate (second row of Table~\ref{table:robotresults}).

\noindent
\textbf{Stacking four cubes:} This task is the same as above except that the agent must now stack four cubes (Figure~\ref{fig:real_robot}b). We obtain an 84\% task success rate (third row of Table~\ref{table:robotresults}).

\noindent
\textbf{Stacking two rectangular blocks:} In this task, eight rectangular blocks were dumped on the table (Figure~\ref{fig:real_robot}c). The robot received +1 reward every time it stacked one block on to of another (Figure~\ref{fig:real_robot}d). After every successful stack, the top block was removed from the table by the experimenter. Unlike the three cube stacking/arranging tasks above, the action space for this task included 16 possible orientations (similar to the simulation experiments). The agent was trained using a seven stage curriculum and the value function hierarchy with $\eta=0.1$ that reduced the number of actions from 51.2k down to 8.32k. This task is interesting because the agent must learn to balance long blocks on top of small blocks as shown in Figure~\ref{fig:real_robot}d. The average success rate for two-block stacking attempts was 80\% (fourth row of Table~\ref{table:robotresults}).

\noindent
\textbf{Cube in mug:} Finally, we evaluated our approach on a task where the robot must pick up a cube and drop it into a mug (Figure~\ref{fig:real_robot}f). During robot testing, at the start of each episode, a random mug was chosen from the selection of eight mugs shown in Figure~\ref{fig:real_robot}e and was randomly placed on the table in a upright position along with a block. Importantly, these were novel mugs in the sense that they were different from any of the six randomly re-scaled mugs used to train the system. The agent received a reward of -0.1 when it moved the mug and +1 when it solved the task within no more than 10 time steps. The average task success rate here was 94\% (last row in Table~\ref{table:robotresults}).

These five robot experiments suggest that deictic image maps can be used to solve a variety of robotic manipulation problems related to pick and place. The stacking-four-cubes experiment demonstrates that the approach can solve problems that involve several pick/place actions. The stacking-two-rectangular blocks experiments shows that the method can perform relatively precise placements. Finally, the cube-in-mug experiment shows that the method can be trained to solve tasks involving novel objects.

\section{Analysis}
\label{sect:analysis}

It turns out that deictic image mapping is theoretically correct in the sense that optimal policies for the abstract MDP found using DQN induce optimal policies for the original system. We use the MDP homomorphism framework.


\begin{definition}[MDP Homomorphism, Ravindran 2004~\cite{ravindran2004algebraic}]
\label{defn:homomorphism}
An MDP homomorphism from an MDP $\mathcal{M} = \langle \mathcal{S}, \mathcal{A}, T, R \rangle$ onto $\mathcal{M}' = \langle \mathcal{S}', \mathcal{A}', T', R' \rangle$ is a tuple of mappings, $\langle f, \{ g_s | s \in \mathcal{S} \} \rangle$, where $f : \mathcal{S} \rightarrow \mathcal{S}'$ is a state mapping, $g_s : \mathcal{A} \rightarrow \mathcal{A}', s \in \mathcal{S}$ is a set of action mappings, and the following two conditions are satisfied for all $s \in \mathcal{S}$, $a \in \mathcal{A}$, and $s' \in \mathcal{S}'$:
\begin{eqnarray}
\label{eqn:homomorphism_condition1}
T'(f(s),g_s(a),s') & = & \sum_{\bar{s} \in \{ s \in \mathcal{S} | f(s)=s'\}} T(s,a,\bar{s}) \\
\label{eqn:homomorphism_condition2}
R'(f(s),g_s(a)) & = & R(s,a).
\end{eqnarray}
\end{definition}

$\mathcal{S}'$ and $\mathcal{A}'$ are the \textit{abstract} state and action sets, respectively. Recall that the action mapping introduced in Equation~\ref{eqn:action_mapping} encodes an action $a$ in terms of $crop(I,a_m)$. Since $I$ is the image or signed distance function perceived by sensors, this is a \textit{state-dependent} mapping. The MDP Homomorphism framework is specifically relevant to this situation because, unlike other abstraction frameworks~\cite{givan2003equivalence,dean1997model}, it allows for state dependent action abstraction. If we can identify conditions under which the action mapping of Equation~\ref{eqn:action_mapping} in conjunction with the state mapping of Equation~\ref{eqn:state_abstraction_original_crops} is an MDP homomorphism, then a theorem exists that says that a solution to the abstract MDP induces a solution in the original problem:
\begin{theorem}[Optimal value equivalence, Ravindran 2004~\cite{ravindran2004algebraic}]
\label{thm:homo_optimal}
Let $\mathcal{M}' = \langle \mathcal{S}', \mathcal{A}', T', R' \rangle$ be the homomorphic image of $\mathcal{M} = \langle \mathcal{S}, \mathcal{A}, T, R \rangle$ under the MDP homomorphism, $\langle f, \{ g_s | s \in \mathcal{S} \} \rangle$. For any $(s,a) \in \mathcal{S} \times \mathcal{A}$, $Q^*(s,a) = Q'^*(f(s),g_s(a)).$
\end{theorem}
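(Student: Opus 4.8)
The plan is to prove the identity by exhibiting the function $\hat Q(s,a) := Q'^*(f(s),g_s(a))$ on $\mathcal{S}\times\mathcal{A}$ as a solution of the Bellman optimality equation of $\mathcal{M}$, and then appealing to uniqueness of that solution: since $Q^*$ is by definition the unique bounded fixed point of the optimality operator of $\mathcal{M}$, this forces $Q^*(s,a) = Q'^*(f(s),g_s(a))$ for every $(s,a)$. Throughout I write $\gamma$ for the discount factor and $V'^*(s') := \max_{a'\in\mathcal{A}'}Q'^*(s',a')$ for the optimal value function of the abstract MDP.

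First I would write down the two Bellman optimality equations: $Q^*$ is the unique solution of $Q(s,a) = R(s,a) + \gamma\sum_{\bar s\in\mathcal{S}}T(s,a,\bar s)\max_{\bar a\in\mathcal{A}}Q(\bar s,\bar a)$, and $Q'^*$ the unique solution of the analogous equation over $\mathcal{S}'\times\mathcal{A}'$. Plugging $\hat Q$ into the right-hand side of the first equation, homomorphism condition~(\ref{eqn:homomorphism_condition2}) immediately rewrites the reward term as $R'(f(s),g_s(a))$. For the continuation term I would first argue that for every successor state $\bar s$, $\max_{\bar a\in\mathcal{A}}\hat Q(\bar s,\bar a) = \max_{\bar a\in\mathcal{A}}Q'^*(f(\bar s),g_{\bar s}(\bar a)) = V'^*(f(\bar s))$, a value that depends on $\bar s$ only through its image under $f$. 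Next I would group the sum over successors $\bar s$ by the value $s' = f(\bar s)$ and apply homomorphism condition~(\ref{eqn:homomorphism_condition1}):
\begin{eqnarray*}
\sum_{\bar s\in\mathcal{S}}T(s,a,\bar s)\,V'^*(f(\bar s))
&=& \sum_{s'\in\mathcal{S}'}V'^*(s')\!\!\sum_{\bar s:\,f(\bar s)=s'}\!\!T(s,a,\bar s) \\
&=& \sum_{s'\in\mathcal{S}'}V'^*(s')\,T'(f(s),g_s(a),s').
\end{eqnarray*}
Substituting back, the optimality operator of $\mathcal{M}$ applied to $\hat Q$ at $(s,a)$ equals $R'(f(s),g_s(a)) + \gamma\sum_{s'}T'(f(s),g_s(a),s')V'^*(s')$, which is precisely the Bellman optimality equation for $\mathcal{M}'$ evaluated at $(f(s),g_s(a))$, hence equal to $Q'^*(f(s),g_s(a)) = \hat Q(s,a)$. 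Thus $\hat Q$ is a fixed point of the $\mathcal{M}$-optimality operator, and the claim follows.

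The hard part will be the reduction of the inner maximum, namely the step $\max_{\bar a\in\mathcal{A}}Q'^*(f(\bar s),g_{\bar s}(\bar a)) = V'^*(f(\bar s))$. Independence of the representative $\bar s$ within a fiber $f^{-1}(s')$ is free, since $V'^*$ is a genuine function on $\mathcal{S}'$; what actually needs the homomorphism structure is that letting $\bar a$ range over $\mathcal{A}$ makes $g_{\bar s}(\bar a)$ range over enough of $\mathcal{A}'$ to attain the abstract optimum at $f(\bar s)$ --- i.e. the surjectivity implicit in calling $\mathcal{M}'$ the homomorphic \emph{image} of $\mathcal{M}$ --- and I would make this assumption explicit. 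As an alternative to the fixed-point argument, the same manipulations serve as the inductive step of an induction showing $Q_n(s,a) = Q'_n(f(s),g_s(a))$ for the value-iteration iterates started from $Q_0\equiv Q'_0\equiv 0$, with the theorem obtained in the limit $n\to\infty$; this route sidesteps any appeal to uniqueness of the optimality fixed point.
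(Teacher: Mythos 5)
Your argument is correct, but note that the paper does not prove this theorem at all: it is imported verbatim from Ravindran's 2004 thesis, so there is no in-paper proof to compare against. Your fixed-point argument (and the value-iteration induction you sketch as an alternative) is essentially the standard proof from that source, and you correctly isolate the one genuinely load-bearing step --- that $\max_{\bar a\in\mathcal{A}}Q'^*(f(\bar s),g_{\bar s}(\bar a))$ equals $V'^*(f(\bar s))$ only because each $g_{\bar s}$ is surjective onto the admissible abstract actions at $f(\bar s)$, which is part of what ``homomorphic image'' means in Ravindran's definition even though the paper's restatement of Definition~\ref{defn:homomorphism} does not spell it out. Making that surjectivity assumption explicit, as you propose, is exactly right; without it the lifted function $\hat Q$ need not satisfy the Bellman optimality equation of $\mathcal{M}$ and the claim can fail.
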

Here, $Q^*(\cdot)$ and $Q'^*(\cdot)$ denote the optimal action-value function for the underlying MDP and the abstract MDP respectively. We now show that the deictic image mapping induces an MDP homomorphism.


\begin{theorem}
Given a move-effect system with state set $\mathcal{S}$; action set $\mathcal{A}$; transition function $T$ such that for all $t \in \mathbb{R}_{>0}$, $\theta_{t+1}$ is conditionally independent of $s_t$ given $crop(I_t,a_m(t))$; a deictic image mapping $\langle f, \{ g_s | s \in \mathcal{S} \} \rangle$; and an abstract reward function $R' : \mathcal{S}' \times \mathcal{A}' \rightarrow \mathbb{R}$; then there exist a reward function $R$ and an abstract transition function $T'$ for which $\langle f, \{ g_s | s \in \mathcal{S} \} \rangle$ is an MDP homomorphism from $\mathcal{M} = \langle \mathcal{S}, \mathcal{A}, T, R \rangle$ to $\mathcal{M}' = \langle \mathcal{S}', \mathcal{A}', T', R' \rangle$.
\label{thm:moveeffect_homo}
\end{theorem}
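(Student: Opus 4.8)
The plan is to verify the two defining conditions of an MDP homomorphism (Equations~\ref{eqn:homomorphism_condition1} and~\ref{eqn:homomorphism_condition2}) for the given $f$ and $\{g_s\}$, exploiting the fact that we are free to choose both the underlying reward $R$ and the abstract transition $T'$. The reward condition is immediate: set $R(s,a) := R'(f(s), g_s(a))$; then Equation~\ref{eqn:homomorphism_condition2} holds by construction. All of the work is in the transition condition. Here the key is to show that $\sum_{\bar{s} \in f^{-1}(s')} T(s,a,\bar{s})$, i.e.\ the probability $P(f(s_{t+1}) = s' \mid s_t = s, a_t = a)$, depends on $(s,a)$ only through the abstract pair $(f(s), g_s(a))$. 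Once that is established we simply \emph{define} $T'(f(s), g_s(a), s')$ to be this common value (and extend $T'$ arbitrarily, e.g.\ as a self-loop, on abstract pairs outside the image of $(f,\{g_s\})$, which the homomorphism conditions do not constrain), and Equation~\ref{eqn:homomorphism_condition1} follows.

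The first real step I would carry out is a purely structural fact about how the abstract state evolves. Unrolling the history representation of Equation~\ref{eqn:state_definition} together with the definition of $f_k$ in Equation~\ref{eqn:state_abstraction_original_crops}, one sees that the most recent history slot of $f(s_{t+1})$ is $\langle crop(I_{t+1}^{-1}, a_m^{-1}(t+1)), a_e^{-1}(t+1)\rangle = \langle crop(I_t, a_m(t)), a_e(t)\rangle$, which is exactly $g_{s_t}(a_t)$; the remaining $k-2$ history slots of $f(s_{t+1})$ are the $k-2$ most recent history slots of $f(s_t)$ shifted by one (the oldest slot being dropped); and the effector entry of $f(s_{t+1})$ is $\theta_{t+1}$. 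Crucially, the freshly perceived image $I_{t+1}$ never appears in $f(s_{t+1})$ — the abstract state only ever references crops of images already present in $s_t$. Hence there is a deterministic map $h$ with $f(s_{t+1}) = h\big(f(s_t),\, g_{s_t}(a_t),\, \theta_{t+1}\big)$, so the only randomness in the abstract successor state is carried by $\theta_{t+1}$.

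Combining this with the hypothesis completes the argument. Since $f(s_{t+1}) = h(f(s_t), g_{s_t}(a_t), \theta_{t+1})$, the conditional law of $f(s_{t+1})$ given $(s_t,a_t)$ is the pushforward through $h(f(s_t), g_{s_t}(a_t), \cdot)$ of the conditional law of $\theta_{t+1}$ given $(s_t,a_t)$. By assumption $\theta_{t+1}$ is conditionally independent of $s_t$ given $crop(I_t, a_m(t))$ (the action $a_t$, and in particular the effector command $a_e(t)$, being part of the conditioning context), so this law depends on $(s_t,a_t)$ only through $g_{s_t}(a_t)$. Therefore $P(f(s_{t+1}) = s' \mid s_t, a_t) = \sum_{\theta'} \mathbf{1}[\,h(f(s_t), g_{s_t}(a_t), \theta') = s'\,]\, P(\theta_{t+1} = \theta' \mid g_{s_t}(a_t))$, which depends on $(s_t,a_t)$ only through $(f(s_t), g_{s_t}(a_t))$, as required; taking this expression as the definition of $T'$ makes $T'$ well defined and verifies Equation~\ref{eqn:homomorphism_condition1}. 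That $T'(\sigma',\alpha',\cdot)$ is a probability distribution follows by summing over $s'$ and marginalizing out $\theta'$ (replace the sum by an integral over $\Theta$ in the continuous case).

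The main obstacle is the structural step: it requires careful index bookkeeping to line up the shifted history window of $f(s_{t+1})$ with that of $f(s_t)$, and — more importantly — the observation that $f(s_{t+1})$ does not reference the newly perceived image $I_{t+1}$. This is precisely what makes a conditional-independence assumption on $\theta_{t+1}$ alone sufficient, rather than also needing an assumption controlling the evolution of $I$. A secondary subtlety worth stating explicitly is the intended reading of the hypothesis: the conditioning set should be understood to be the deictic abstract action $\langle crop(I_t, a_m(t)), a_e(t)\rangle$, i.e.\ the outcome distribution of $\theta_{t+1}$ must be pose-invariant; otherwise two distinct base motions with identical local appearance could map to the same abstract action yet induce different successor-state distributions, breaking well-definedness of $T'$.
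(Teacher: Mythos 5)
Your proof is correct and follows essentially the same route as the paper's: define $R$ by pulling back $R'$ through $(f,g_s)$, observe that the abstract successor state is a deterministic function of $f(s_t)$, $g_{s_t}(a_t)$, and $\theta_{t+1}$ (the paper writes this as $s'_{t+1} = \langle f_{k-1}(s_t), g_{s_t}(a_t), \theta_{t+1}\rangle$, your map $h$), and then invoke the conditional-independence hypothesis on $\theta_{t+1}$ to conclude that the law of $s'_{t+1}$ factors through the abstract pair. Your additional remarks on extending $T'$ off the image of $(f,\{g_s\})$ and on the intended reading of the conditioning set are careful refinements of the same argument rather than a different approach.
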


\begin{proof}
See Appendix 1 in supplementary materials.
\end{proof}

As a consequence of Theorem~\ref{thm:moveeffect_homo} and~\ref{thm:homo_optimal}, we know that optimal solutions to the abstract MDP induced by Equations~\ref{eqn:state_abstraction_original_crops} and~\ref{eqn:action_mapping} induce optimal solutions to the original move-effect system as long as the conditions of Theorem~\ref{thm:moveeffect_homo} are satisfied.

\section{Related Work, Limitations, and Discussion}


This work is related to prior applications of deixis in AI. Evidence exists that the human brain leverages deictic representations during performance of motor tasks~\cite{ballard1997deictic}. Whitehead and Ballard proposed an application of deixis to reinforcement learning, focusing on blocks world applications in particular~\cite{whitehead1991learning}. While our representations are only loosely related to Ballard's computational architecture, our work is closely related to the psychological evidence in \cite{ballard1997deictic}. 

The analysis section of this paper leverages the MDP Homomorphism framework introduced by Ravindran and Barto~\cite{ravindran2004algebraic,ravindran2003smdp}. MDP Homomorphisms are related to other MDP abstraction methods~\cite{givan2003equivalence,dean1997model}. However, unlike those methods, the MDP Homomorphisms allow for the state-dependent action abstraction, a critical part of our proposed abstraction.

This paper is also generally related to a variety of recent works exploring deep learning for robotic manipulation. The work of \cite{zeng2018learning} is related to ours in that they estimate a Q-function over a large number of grasping and pushing actions, in their case by creating a fully convolutional Q-network. The work of \cite{kunli_rss2018} also involves a large action space of push actions, pruned using a heuristic. Other pieces of recent related manipulation work are \cite{fang2018learning} and \cite{gualtieri_icra2018} who each propose methods of learning task-relevant grasps in the context of a reinforcement learning problem. Our work can be loosely viewed as an extension of recent grasp detection work by \cite{mahler2017dex} and \cite{gualtieri_iros2016}, however the focus here is on more complex manipulation tasks rather than just grasping.

In contrast to some of the work cited above, a nice aspect of our method is that it can be applied to a relatively general class of problems, such as those explored in our experiments. However, the method has important limitations. First, limits on the number of actions that can be realistically considered on a given time step means that it will be important to find ways to limit the number of motion actions that need to be considered. Another limitation is that the deictic image patch used to represent a motion only ``sees'' a portion of the total image. As a result, our method is not well suited to respond to non-local state information. Although we have not yet tried this, we expect that this problem can be solved by using a ``foveated'' action patch that ``sees'' the world around a potential action choice at multiple resolutions.

\bibliography{platt,main}  
\bibliographystyle{aaai}

\newpage

\section*{Appendix 1: Proof of Theorem~\ref{thm:moveeffect_homo}}

\begin{theorem}
Given a move-effect system with state set $\mathcal{S}$; action set $\mathcal{A}$; transition function $T$ such that for all $t \in \mathbb{R}_{>0}$, $\theta_{t+1}$ is conditionally independent of $s_t$ given $crop(I_t,a_m(t))$; a deictic image mapping $\langle f, \{ g_s | s \in \mathcal{S} \} \rangle$; and an abstract reward function $R' : \mathcal{S}' \times \mathcal{A}' \rightarrow \mathbb{R}$; then there exist a reward function $R$ and an abstract transition function $T'$ for which $\langle f, \{ g_s | s \in \mathcal{S} \} \rangle$ is an MDP homomorphism from $\mathcal{M} = \langle \mathcal{S}, \mathcal{A}, T, R \rangle$ to $\mathcal{M}' = \langle \mathcal{S}', \mathcal{A}', T', R' \rangle$.
\end{theorem}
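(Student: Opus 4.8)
The plan is to construct $R$ and $T'$ explicitly and then verify the two conditions of Definition~\ref{defn:homomorphism}. For the reward, set $R(s,a) := R'(f(s), g_s(a))$; this is a well-defined function on $\mathcal{S}\times\mathcal{A}$ that satisfies Equation~\ref{eqn:homomorphism_condition2} by construction, with nothing further to check. All of the content therefore lies in exhibiting a transition function $T'$ on the abstract MDP satisfying Equation~\ref{eqn:homomorphism_condition1}; equivalently, in showing that the lumped quantity $\sum_{\bar{s}:\,f(\bar{s})=s'} T(s,a,\bar{s})$ depends on $(s,a)$ only through the abstract pair $(f(s), g_s(a))$. Granting this, one simply \emph{defines} $T'(f(s), g_s(a), s')$ to equal that common value (and defines $T'(\sigma,\alpha,\cdot)$ arbitrarily, e.g.\ as a point mass, on abstract pairs $(\sigma,\alpha)$ not of the form $(f(s), g_s(a))$); then Equation~\ref{eqn:homomorphism_condition1} holds by definition, and $T'$ is a genuine transition function since $\sum_{s'\in\mathcal{S}'}\sum_{\bar{s}:\,f(\bar{s})=s'} T(s,a,\bar{s})=\sum_{\bar{s}}T(s,a,\bar{s})=1$.

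The heart of the proof is a bookkeeping claim: $f(s_{t+1})$ is a deterministic function $\Phi\big(f(s_t), g_{s_t}(a_t), \theta_{t+1}\big)$ of the abstract state, the abstract action, and the single new effector configuration. Expanding Equation~\ref{eqn:state_abstraction_original_crops} at time $t+1$, every cropped image in $f(s_{t+1})$ has the form $crop(I_{t+1}^{-i}, a_m^{-i}(t+1))$ with $i\ge 1$; because the history window shifts by one, $I_{t+1}^{-1}=I_t$ and $a_m^{-1}(t+1)=a_m(t)$, so the most recent crop is $crop(I_t, a_m(t))$, which is exactly the first component of $g_{s_t}(a_t)$, while for $i\ge 2$ we have $crop(I_{t+1}^{-i}, a_m^{-i}(t+1))=crop(I_t^{1-i}, a_m^{1-i}(t))$, one of the crops already held in $f(s_t)$ (the oldest being dropped). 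The same shift identity handles the effector-motion entries $a_e^{-i}(t+1)$: the most recent is $a_e(t)$ from $g_{s_t}(a_t)$ and the remaining $k-2$ come from $f(s_t)$. The key point is that $f(s_{t+1})$ never refers to the freshly observed raw image $I_{t+1}$, only to crops of \emph{earlier} images; hence the only stochastic ingredient of $s_{t+1}$ that is not already pinned down by $(f(s_t), g_{s_t}(a_t))$ and that nonetheless affects $f(s_{t+1})$ is $\theta_{t+1}$, which establishes the claim.

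Combining the claim with the conditional-independence hypothesis finishes the argument. For any $(s,a)=(s_t,a_t)$,
\begin{align*}
\sum_{\bar{s}:\,f(\bar{s})=s'} T(s,a,\bar{s})&=\Pr\big(f(s_{t+1})=s'\mid s_t,a_t\big)\\
&=\sum_{\theta:\,\Phi(f(s_t),g_{s_t}(a_t),\theta)=s'}\Pr\big(\theta_{t+1}=\theta\mid s_t,a_t\big),
\end{align*}
and by hypothesis $\Pr(\theta_{t+1}=\theta\mid s_t,a_t)$ depends on $s_t$ only through $crop(I_t,a_m(t))$, hence on $(s_t,a_t)$ only through $g_{s_t}(a_t)=\langle crop(I_t,a_m(t)), a_e(t)\rangle$; therefore the whole sum depends on $(s_t,a_t)$ only through $\big(f(s_t), g_{s_t}(a_t)\big)$, which is precisely the well-definedness needed above. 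I expect the main obstacle to be the index bookkeeping in the second paragraph: one must check carefully that at time $t+1$ the $k-1$ crops (and the $k-1$ effector-motion entries) in $f(s_{t+1})$ decompose cleanly into ``the one embedded in the abstract action, plus the $k-2$ most recent ones already stored in $f(s_t)$,'' so that genuinely no information beyond $\big(f(s_t), g_{s_t}(a_t), \theta_{t+1}\big)$ is required. A secondary subtlety is the precise reading of the hypothesis: it must be understood as $\Pr(\theta_{t+1}\mid s_t,a_t)=\Pr(\theta_{t+1}\mid crop(I_t,a_m(t)), a_e(t))$, since the $a_e(t)$ portion of the conditioning is itself part of $g_{s_t}(a_t)$ and is needed for the conclusion.
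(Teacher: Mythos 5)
Your proof is correct and follows essentially the same route as the paper's: define $R(s,a)=R'(f(s),g_s(a))$, observe that $f(s_{t+1})$ is a deterministic function of $\langle f(s_t), g_{s_t}(a_t), \theta_{t+1}\rangle$ via the history-shift identity, and invoke the conditional-independence hypothesis on $\theta_{t+1}$ to get well-definedness of $T'$. Your version is in fact somewhat more explicit than the paper's, which asserts the decomposition $s'_{t+1}=\langle f_{k-1}(s_t), g_{s_t}(a_t), \theta_{t+1}\rangle$ without spelling out the index bookkeeping or the explicit construction of $T'$ as the lumped sum.
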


\begin{proof}

The two conditions of Definition~\ref{defn:homomorphism} that must be satisfied are Equations~\ref{eqn:homomorphism_condition1} and \ref{eqn:homomorphism_condition2}. 

First, consider Equation~\ref{eqn:homomorphism_condition2}. Since we are given the abstract reward function, $R'$, we can define the underlying reward function to be $R(s,a) = R'(f(s),g_s(a)), \forall (s,a) \in \mathcal{S} \times \mathcal{A}$. Notice that the condition of the theorem that we are \textit{given} $R'$ means that it must be possible to express the reward function in the abstract state and action space.

Now, consider how to satisfy Equation~\ref{eqn:homomorphism_condition1}. We must identify an abstraction transition function $T'$ such that for all $s_t,a_t,s_{t+1}' \in \mathcal{S} \times \mathcal{A} \times \mathcal{S}'$,
\begin{equation}
\label{eqn:toshow}
P(s_{t+1}' | s_t,a_t) = T'(f(s_t),g_{s_t}(a_t),s_{t+1}')
\end{equation}
To show that $T'$ exists that satisfies Equation~\ref{eqn:toshow}, it is sufficient to show that $s_{t+1}'$ is conditionally independent of $s_t$ and $a_t$ given $f(s_t)$ and $g_{s_t}(a_t)$:
\begin{eqnarray}
\label{eqn:toshow2}
\nonumber
P(s_{t+1}' | f(s_t), g_{s_t}(a_t)) & = & P(s_{t+1}' | f(s_t), g_{s_t}(a_t), s_t, a_t) \\
& = & P(s_{t+1}' | s_t, a_t),
\end{eqnarray}
which is identical to Equation~\ref{eqn:toshow}. 

Notice, however, that since we can express $s_{t+1}'$ as
$s'_{t+1} = \langle f_{k-1}(s_t), g_{s_t}(a_t), \theta_{t+1} \rangle$, then $s_{t+1}'$ is conditionally independent of $s_t$ and $a_t$ given $f_{k-1}(s_t)$ (and therefore $f_k(s_t) = f(s_t)$), $g_{s_t}(a_t)$, and $\theta_{t+1}$. Since by assumption $\theta_{t+1}$ is conditionally independent of $s_t$ given $g_{s_t}(a_t)$ and we know that it is conditionally independent of $a_t$ given $a_e(t)$ by the definitions of those variables, we know that $s_{t+1}'$ is conditionally independent of $s_t$ and $a_t$ given $f(s_t)$ and $g_{s_t}(a_t)$. As a result, we have shown Equation~\ref{eqn:toshow2} and the theorem is proven.

\end{proof}

\end{document}